\newcommand{\KL}{{\mathrm{KL}}}
\newcommand{\mean}{\mathbb{E}}
\newcommand{\var}{{\rm I\kern-.3em D}}
\newcommand{\Normal}{\mathcal{N}}
\newcommand{\norm}[1]{\left\lVert#1\right\rVert}
\newcommand{\eps}{\varepsilon}
\DeclareMathOperator*{\maxargmax}{max\cdot arg\,max}
\newtheorem*{theorem*}{Theorem}
\newtheorem{proposition}{Proposition}
\newtheorem*{proposition*}{Proposition}
\DeclareMathSymbol{\shortminus}{\mathbin}{AMSa}{"39}
\title{Particle Dynamics for Learning EBMs}
\author{%
  Kirill Neklyudov
    \\
  University of Amsterdam, \\
  \texttt{k.necludov@gmail.com} \\
   \And
   Priyank Jaini\thanks{the work was done while at University of Amsterdam} \\
   Google Brain\\
   \texttt{pjaini@google.com} \\
   \And
   Max Welling \\
   University of Amsterdam\\
}
\begin{document}

\maketitle

\begin{abstract}
  Energy-based modeling is a promising approach to unsupervised learning, which yields many downstream applications from a single model. The main difficulty in learning energy-based models with the ``contrastive approaches'' is the generation of samples from the current energy function at each iteration. Many advances have been made to accomplish this subroutine cheaply. Nevertheless, all such sampling paradigms run MCMC targeting the current model, which requires infinitely long chains to generate samples from the true energy distribution and is problematic in practice. This paper proposes an alternative approach to getting these samples and avoiding crude MCMC sampling from the current model. We accomplish this by viewing the evolution of the modeling distribution as (i) the evolution of the energy function, and (ii) the evolution of the samples from this distribution along some vector field. We subsequently derive this time-dependent vector field such that the particles following this field are approximately distributed as the current density model. Thereby we match the evolution of the particles with the evolution of the energy function prescribed by the learning procedure. Importantly, unlike Monte Carlo sampling, our method targets to match the current distribution in a finite time. Finally, we demonstrate its effectiveness empirically comparing to MCMC-based learning methods.
\end{abstract}

\section{Introduction}

Energy-based modeling has recommended itself as a universal approach learning a single model, which then can be applied in various scenarios: continual learning, missing data imputation, out-of-distribution detection, better uncertainty of discriminative models \citep{grathwohl2019your, du2019implicit, li2020energy}.
However, scaling this approach to real-world data such as images encounters many complications, which the community has been approaching by trying to get better samples
\citep{tieleman2009using, du2019implicit, nijkamp2019learning} or by targeting different objectives
\citep{grathwohl2020learning, arbel2020generalized, gao2020learning}.

In this paper, we approach the subroutine problem of getting samples from the current model, which arises in the learning of the energy-based models.
The conventional approach to this is to run an MCMC method targeting the current model.
Instead, we update particles deterministically propagating them along the derived vector field such that after time $dt$ the particles are distributed as the evolved density after time $dt$.
This is principally different, since we don't rely on the convergence to the target (the current model density), and are able to match it in a finite amount of time.
Our main contribution is the formula \eqref{eq:main_eq} for the vector field, which matches the evolution of the model with the evolution of the particles in the space of log-densities.
Further, we discuss possible ways to simulate this formula and demonstrate its usefulness empirically.

\section{Background and Related Works}

\textbf{Energy-Based Models} are usually learned via the maximum likelihood principle.
That is, we start with a model density function $q(x)$ parameterized by the energy function $E(x,\theta)$:
\begin{align}
    q_{\theta}(x) = \frac{1}{Z} e^{-E(x,\theta)}, \;\;\; Z = \int dx\; e^{-E(x,\theta)},
\end{align}
which is then optimized to approximate some target density $p(x)$ given empirically (as a set of samples).
This can be done by the maximization of $\mean_p \log q$, or, equivalently, minimization of $\KL(p,q)$ via the gradient methods:
\begin{align}
    -\nabla_\theta \KL(p,q_\theta) = -\nabla_\theta\bigg[ \mean_{x\sim p} E(x,\theta) - \mean_{x\sim q_\theta} E(x,\theta)\bigg],
\end{align}
The main obstacle under this approach is the sampling from the current density $q_\theta \propto \exp(-E(x,\theta))$.
Our work operates much in the fashion of the Persistent Contrastive Divergence (PCD) \citep{tieleman2009using}. 
It keeps a set of samples, which are updated at every iterations to match current $q_\theta$.
While PCD relies on MCMC methods targeting $q_\theta$, we propagate the particles deterministically along the derived vector field.

\textbf{Langevin Dynamics} is a ubiquitous sampling method.
For energy-based models with the continuous state-space, this method is especially attractive due to its cheap iterations (single gradient evaluation per step) and the ability to yield good samples even without Metropolis-Hastings correction \citep{gelfand1991recursive}.
This procedure targeting the density $p$ can be written as 
\begin{align}
    x_{t+dt} = x_t + dt\frac{1}{2}\nabla_x\log p(x) + \eps, \;\;\; \eps \sim \Normal(0,dt)
\end{align}
Its efficiency, however, is hindered by the random fluctuations that introduce random-walk behaviour, and its deterministic analog is more efficient (see, for instance, \citep{liu2019understanding}).
This analog is derived by rewriting the Fokker-Planck equation (which describes the evolution of the density) as the continuity equation:
\begin{align}
    \frac{\partial q}{\partial t} = -\langle \nabla, q\frac{1}{2}\nabla\log p \rangle + \frac{1}{2}\Delta q = -\bigg\langle \nabla, q(\frac{1}{2}\nabla\log p-\frac{1}{2}\nabla \log q) \bigg\rangle.
\end{align}
Then the simulation of the particles can be done by propagating them along the new vector field:
\begin{align}
    x_{t+dt} = x_t + \frac{dt}{2}\big[\nabla \log p(x_t)-\nabla \log q_t(x_t)\big].
    \label{eq:det_langevin}
\end{align}
In Monte Carlo setting, the deterministic simulation is troublesome since we don't have an access to the current density $q_t$. However, we will see how the EBMs learning naturally allows for this.


\section{Matching the particle dynamics with the energy evolution}
\label{sec:motivation}

In this section, we try to match two things: the update of the energy and the update of the particles. 
The former is defined by the learning procedure maximizing the log-likelihood.
The particles then should be propagated to keep up with the updates of energy and be distributed as the most recent model.
We match these two dynamics by matching the updates of their log-densities in $L_q^2$:
\begin{align}
    v^* = \maxargmax_{v\in L^2_q: \norm{v}=1} \bigg\langle \frac{\partial}{\partial t} \log q_t, \frac{\partial}{\partial t} \log \hat{q}_t \bigg\rangle_{L_q^2},
    \label{eq:projection}
\end{align}
where ``$\maxargmax$'' denotes the scalar multiplication of
the maximum and the maximizer, $q_t$ is the prescribed evolution, and $\hat{q}_t$ is the density evolution of particles defined by the vector field $v$, i.e.
\begin{align}
    \frac{\partial}{\partial t}\log \hat{q}_t = \frac{1}{\hat{q}_t}\frac{\partial \hat{q}_t}{\partial t} = -\langle \nabla \log q_t, v\rangle - \langle \nabla, v\rangle.
\end{align}
\begin{proposition}
    For the evolution of the density $q_t = \exp(-E_t)/Z_t$, the solution of equation \eqref{eq:projection} is
    \begin{align}
        v^* = -\nabla \frac{\partial E_t}{\partial t}.
    \label{eq:main_eq}
    \end{align}
    \label{prop:main}
\end{proposition}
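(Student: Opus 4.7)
The plan is to reduce the optimization to a Cauchy--Schwarz problem. First, differentiate $\log q_t = -E_t - \log Z_t$ with respect to $t$ to obtain
\[
\frac{\partial}{\partial t}\log q_t = -\frac{\partial E_t}{\partial t} + \mean_{q_t}\!\frac{\partial E_t}{\partial t},
\]
using $\partial_t \log Z_t = -\mean_{q_t}\partial_t E_t$. Write $f(x) = -\partial_t E_t + \mean_{q_t}\partial_t E_t$, which is mean-zero under $q_t$.

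Next, I would plug $f$ and the formula $\partial_t\log\hat q_t = -\langle \nabla\log q_t, v\rangle - \nabla\!\cdot\! v$ (valid when $\hat q_t = q_t$) into the $L^2_q$ inner product. The key algebraic simplification is the identity
\[
-q_t\langle\nabla\log q_t,v\rangle - q_t\,\nabla\!\cdot\! v = -\nabla\!\cdot\!(q_t v),
\]
which follows from $q_t\nabla\log q_t = \nabla q_t$ and the product rule. Hence
\[
\bigl\langle \partial_t\log q_t,\ \partial_t\log\hat q_t\bigr\rangle_{L^2_q} = -\!\int f(x)\,\nabla\!\cdot\!(q_t v)\,dx.
\]
An integration by parts (assuming the usual decay so boundary terms vanish) turns the right-hand side into $\mean_{q_t}\langle \nabla f, v\rangle$. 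Since $\nabla$ kills the constant part of $f$, the mean-centering drops out and we get
\[
\bigl\langle \partial_t\log q_t,\ \partial_t\log\hat q_t\bigr\rangle_{L^2_q} = \mean_{q_t}\Bigl\langle -\nabla \frac{\partial E_t}{\partial t},\ v\Bigr\rangle.
\]

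Now the original problem reduces to maximizing a linear functional of $v$ over the unit sphere of $L^2_q$ vector fields. Cauchy--Schwarz gives the maximizer $v = -\nabla\partial_t E_t / \|\nabla\partial_t E_t\|_{L^2_q}$ with maximum value $\|\nabla\partial_t E_t\|_{L^2_q}$. The ``$\maxargmax$'' convention multiplies the maximizer by the maximum, so the two norms cancel and I obtain $v^* = -\nabla\partial_t E_t$, as claimed.

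The only genuinely delicate step is the integration by parts: it relies on $q_t v$ decaying sufficiently at infinity so that boundary contributions vanish. This is a mild regularity assumption on the admissible vector fields and the model, and I would flag it rather than verify it, since all subsequent steps (the algebraic identity, cancellation of the constant, and Cauchy--Schwarz) are routine.
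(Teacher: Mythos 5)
Your proposal is correct and follows essentially the same route as the paper's proof: compute $\partial_t \log q_t = -\partial_t E_t + \mean_{q_t}\partial_t E_t$, recognize the continuity-equation structure $-\nabla\!\cdot\!(q_t v)$ in $\partial_t\log\hat q_t$, integrate by parts to reduce the objective to the linear functional $\langle -\nabla\partial_t E_t, v\rangle_{L^2_q}$, and conclude. The only differences are cosmetic (you drop the mean-centering constant after integration by parts rather than before, and you spell out the final Cauchy--Schwarz/$\maxargmax$ step, which the paper leaves implicit).
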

(See proof in Appendix \ref{app:proof_of_main}). 
This formula is the main development of our work and in the next section we discuss its practical implications.
In a similar way, we can project the evolution of the density
\begin{align}
    v^\star = \maxargmax_{v\in L^2_q: \norm{v}=1} \big\langle \dot{q}, -\langle\nabla, qv\rangle \big\rangle_{L^2} = \maxargmax_{v\in L^2_q: \norm{v}=1} \big\langle \nabla\dot{q}, v \big\rangle_{L^2_q} = \nabla\dot{q},
\end{align}
which is related to the gradient flows in the Wasserstein Riemannian manifold \citep{otto2001geometry, benamou2000computational}.
These two vector fields are equivalent when the distribution follows the gradient of some functional $F$.
\begin{proposition}
    Consider the functional $F = \int f(q)$, which we can optimize either w.r.t. $q = \exp(-E)/Z$ or w.r.t. $E$. When the evolution of the density (energy) is defined by the Frechet derivatve of $F$, we have $v^*=v^\star$.
    \label{prop:parameterizaiton}
\end{proposition}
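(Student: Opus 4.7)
The plan is to exploit the chain rule linking $\delta F/\delta q$ and $\delta F/\delta E$, and then show that under the gradient-flow evolution of $F$ the two vector fields $v^\star$ and $v^*$ coincide. First I would use Proposition~\ref{prop:main} to write $v^* = -\nabla \partial_t E$ and the computation preceding it to write $v^\star = \nabla \dot q$. Differentiating the identity $\log q = -E - \log Z$ in time and using $\partial_t \log Z = -\mathbb{E}_q \partial_t E$ gives $\dot q/q = -\partial_t E + \mathbb{E}_q \partial_t E$; since the mean is $x$-independent, applying $\nabla$ yields $\nabla(\dot q/q) = -\nabla \partial_t E$, so $v^*$ admits the equivalent form $\nabla(\dot q/q)$. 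This reduces the claim to checking that under the prescribed dynamics the gradients $\nabla \dot q$ and $\nabla(\dot q/q)$ agree.

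Second I would compute the two Frechet derivatives. Directly, $\delta F/\delta q = f'(q)$. For the $E$-parameterization, the functional chain rule with $\delta q(y)/\delta E(x) = q(y)(q(x) - \delta(x-y))$ gives
\begin{equation*}
\frac{\delta F}{\delta E(x)} = q(x)\bigl[\mathbb{E}_q f'(q) - f'(q(x))\bigr] = -q\Bigl(\frac{\delta F}{\delta q} - \mathbb{E}_q \frac{\delta F}{\delta q}\Bigr),
\end{equation*}
so the two derivatives differ by multiplication by $-q$ together with a subtraction of the $q$-mean.

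Third I would substitute the gradient-flow prescriptions $\dot q \propto -\delta F/\delta q$ and $\partial_t E \propto -\delta F/\delta E$, made consistent through the parameterization $q = e^{-E}/Z$, into the two vector-field formulas. The decisive cancellation is that the factor $-q$ and the mean-centering that distinguish $\delta F/\delta E$ from $\delta F/\delta q$ are exactly compensated by the factor $1/q$ and the $\mathbb{E}_q$-shift that relate $\partial_t E$ to $\dot q$ via the identity above: the $q$'s meet, and the $\mathbb{E}_q$-terms drop under $\nabla$. After the $x$-independent pieces are annihilated, both $v^*$ and $v^\star$ reduce to the same vector field built from $f'(q)$, $f''(q)$, and $\nabla q$.

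The main obstacle is the algebraic bookkeeping in this last step: one has to keep careful track of which quantities are $x$-independent (and therefore killed by $\nabla$), apply the product rule consistently, and verify that the factors in the two Frechet derivatives interact correctly with those in the $\dot q \leftrightarrow \partial_t E$ conversion. Conceptually, however, the proposition is a reparameterization-invariance statement for the $L^2_q$ projection in~\eqref{eq:projection}: the matching objective cannot tell whether the underlying flow of $F$ is expressed in density or in energy coordinates, so the resulting direction is the same.
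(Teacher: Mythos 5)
There is a genuine gap in your third step, and it is exactly where the proposition is sensitive to the choice of inner product. You compute the energy derivative by the plain $L^2$ chain rule, $\frac{\delta F}{\delta E}(x)=\int \frac{\delta F}{\delta q}(y)\frac{\delta q(y)}{\delta E(x)}\,dy = -q\bigl(f'(q)-\mean_q f'(q)\bigr)$, and then claim that the factor $q$ and the mean-centering are compensated by the $1/q$ and the $\mean_q$-shift in $\partial_t\log q = \dot q/q = -\partial_t E+\mean_q\partial_t E$. They are not: in $v^*=-\nabla\,\partial_t E$ the $1/q$ of that identity acts on the \emph{induced} $\dot q = q(-\partial_t E+\mean_q\partial_t E)$, which already carries the extra factor of $q$ coming from your $\frac{\delta F}{\delta E}$, so the factor never cancels but stays inside the gradient. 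Concretely, with your prescriptions $\dot q=-f'(q)$ and $\partial_t E=-\frac{\delta F}{\delta E}=q\bigl(f'(q)-\mean_q f'(q)\bigr)$ one gets
\begin{align}
    v^\star = \nabla\dot q = -f''(q)\nabla q,
    \qquad
    v^* = -\nabla\,\partial_t E = -\bigl(f'(q)-\mean_q f'(q)\bigr)\nabla q - q f''(q)\nabla q,
\end{align}
which differ already for $f(q)=q^2$ and $q$ a Gaussian. So with the $L^2$ functional derivative in energy coordinates the claimed identity is simply false, and no bookkeeping in your last step can rescue it; the anticipated "same vector field built from $f'$, $f''$, $\nabla q$" is a symptom of the uncancelled $q$.

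The paper avoids this by pairing the differential of $F$ in the energy parameterization against directions $h\in L^2_q$, i.e.\ it takes the $L^2_q$ representer: $\mathrm{diff}F(E)[h]=\bigl\langle -\frac{\delta F}{\delta q}+\mean_q\frac{\delta F}{\delta q},\,h\bigr\rangle_{L^2_q}$, so the prescribed energy evolution is $\partial_t E=-f'(q)+\mean_q f'(q)$ \emph{without} the factor $q$, and then $v^*=-\nabla\,\partial_t E=\nabla f'(q)=\nabla\dot q=v^\star$ because the mean term is $x$-independent. This choice is not cosmetic: it is forced by the $L^2_q$ geometry underlying the projection \eqref{eq:projection} that defines $v^*$ in the first place. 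Your steps one and two (the identity $\partial_t\log q=-\partial_t E+\mean_q\partial_t E$ and the computation $\frac{\delta q(y)}{\delta E(x)}=q(y)(q(x)-\delta(x-y))$) are correct; to repair the argument you must either divide your $\frac{\delta F}{\delta E}$ by $q$ (i.e.\ pass to the $L^2_q$ representer) before declaring it the evolution of $E$, or restate the hypothesis of the proposition so that the energy flow is the $L^2_q$ gradient flow, as in the paper's proof.
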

(See proof in Appendix \ref{app:proof_of_parametrization}).
This preposition gives us a reasonable result. Namely, the dynamics of the particles is independent of the distribution parameterization when the parameterization is dense in the corresponding spaces.

Another motivation for the derived formula is that it can be approximated by the Persistent Contrastive Divergence with the Langevin dynamics.
\begin{proposition}
    The updates of the particles following $v^* = -\nabla \frac{\partial E}{\partial t}$ can be approximated as
    \begin{align}
        x_{t+dt} = x_t - \nabla_{x_t} E_{t+dt}(x_t) + \sqrt{2}\eps, \;\;\; \eps \sim \Normal(0,1).
    \end{align}
    \label{prop:pcd}
\end{proposition}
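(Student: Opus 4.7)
The plan is to show that, starting from particles distributed as $q_t = e^{-E_t}/Z_t$, one step of Langevin dynamics targeting $q_{t+dt}$ with step size $\tau=2$ produces a density evolution matching, to first order in $dt$, the deterministic propagation along $v^*$. Since by construction (Proposition~\ref{prop:main}) the $v^*$-flow matches the evolution of $q_t$ prescribed by the learning procedure, the two updates will then agree as particle-level approximations of $q_t\to q_{t+dt}$.

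First, I would write the Langevin update with step size $\tau$ and target $q_{t+dt}$ as
\begin{align*}
    x_{t+\tau} = x_t + \tfrac{\tau}{2}\nabla\log q_{t+dt}(x_t) + \sqrt{\tau}\,\eps = x_t - \tfrac{\tau}{2}\nabla E_{t+dt}(x_t) + \sqrt{\tau}\,\eps,
\end{align*}
and then use the Fokker--Planck equation recalled in Section~2 to compute the instantaneous evolution of the particle density $\rho_s$ initialised at $\rho_0 = q_t$:
\begin{align*}
    \tfrac{\partial \rho}{\partial s}\Big|_{s=0} = -\big\langle\nabla,\, q_t\,\tfrac{1}{2}\nabla\log q_{t+dt}\big\rangle + \tfrac{1}{2}\Delta q_t.
\end{align*}
The key algebraic identity is $\Delta q_t = -\langle\nabla, q_t\nabla E_t\rangle$, a direct consequence of $\nabla q_t = -q_t\nabla E_t$. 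Substituting this and Taylor-expanding $E_{t+dt} - E_t \approx dt\,\partial_t E$ gives
\begin{align*}
    \tfrac{\partial \rho}{\partial s}\Big|_{s=0} = \tfrac{1}{2}\big\langle\nabla,\, q_t\nabla(E_{t+dt}-E_t)\big\rangle \approx -\tfrac{dt}{2}\langle\nabla, q_t v^*\rangle,
\end{align*}
so $\rho_\tau \approx q_t - \tfrac{\tau dt}{2}\langle\nabla, q_t v^*\rangle$. The continuity equation for the $v^*$-flow yields $q_{t+dt} \approx q_t - dt\,\langle\nabla, q_t v^*\rangle$, so matching the two forces $\tau = 2$, which inserted into the Langevin update recovers exactly $x_t - \nabla E_{t+dt}(x_t) + \sqrt{2}\,\eps$.

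Conceptually, the extra drift $\nabla\log q_t = -\nabla E_t$ that appears in the deterministic Langevin update \eqref{eq:det_langevin} but is absent from the stochastic one is exactly what the Brownian term $\sqrt{2}\eps$ compensates for in distribution when $x_t\sim q_t$, via the integration-by-parts identity $\mean_{x\sim q_t}[\nabla E_t(x)] = 0$. The main obstacle I foresee is controlling the error from using a fixed Langevin step $\tau = 2$ that is not infinitesimal: naively the truncated $O(\tau^2)$ terms in the Taylor expansion of $\rho_s$ could be $O(1)$. I would handle this by noting that each derivative $\partial_s^k \rho_s|_{s=0}$ is driven by powers of $\nabla(E_{t+dt}-E_t)$ and is therefore itself of order $dt$, collapsing the neglected terms to $O(dt^2)$---the same order as the truncation in expanding $q_{t+dt}$---making the approximation self-consistent in $dt$.
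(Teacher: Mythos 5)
Your derivation is essentially the paper's own proof run in the opposite direction. The paper starts from the continuity equation for the $v^*$-flow, recognizes the $-\frac{1}{dt}\nabla\log q_t$ component of the drift as a diffusion term $\frac{1}{dt}\Delta q_t$, and reads off the corresponding Ito SDE, which simulated over $dt'=dt$ gives exactly $x_t - \nabla E_{t+dt}(x_t) + \sqrt{2}\eps$. You start from the Langevin SDE targeting $q_{t+dt}$, pass to its Fokker--Planck equation, and use the same identity $\Delta q_t = -\langle\nabla, q_t\nabla E_t\rangle$ to match the two density evolutions and solve for $\tau=2$. The core computation is identical and correct at the paper's level of rigor.

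One caveat: your closing error estimate is not right. Writing $L$ for the Fokker--Planck generator of the Langevin step, you correctly get $Lq_t = O(dt)$ thanks to the cancellation between $\frac{1}{2}\langle\nabla, q_t\nabla E_{t+dt}\rangle$ and $\frac{1}{2}\Delta q_t = -\frac{1}{2}\langle\nabla, q_t\nabla E_t\rangle$; but $L^2 q_t = L(Lq_t)$ enjoys no such cancellation, since $Lq_t$ is not proportional to $e^{-E_{t+dt}}$, so $\partial_s^k\rho_s\vert_{s=0} = L^k q_t = O(dt)$ for every $k\ge 1$ rather than $O(dt^k)$. With $\tau=2$ fixed, the neglected terms $\sum_{k\ge 2}\frac{\tau^k}{k!}L^k q_t$ are therefore $O(dt)$ --- the same order as the term you keep --- not $O(dt^2)$. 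This does not put you behind the paper, whose single Euler--Maruyama step of an SDE with $O(1/dt)$ drift and diffusion over an interval of length $dt$ incurs exactly the same uncontrolled error (which is why the proposition is stated only as an approximation), but the claim of self-consistency in $dt$ should be dropped or weakened.
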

\vspace{-10pt}
(See derivations in Appendix \ref{app:proof_of_pcd}).
In the following section, we will see that the derived formula $v^* = - \partial E/\partial t$ allows for different approximations, which avoid any stochasticity in the updates.

\section{Numeric approximations of the particle dynamics}
\label{sec:approximations}

First, we consider the approximations that follow straightforwardly by discretizing formula \eqref{eq:main_eq}.
Discretizing the energy update, we have
\begin{align}
    v^*(x) = -\nabla_x \frac{\partial E_t(x)}{\partial t} \approx \frac{1}{dt}\bigg[-\nabla(E_{t+dt}(x)- E_t(x))\bigg] = v_\alpha(x).
\end{align}
The update of each individual particle can be written as
\begin{align}
    x_{t+dt} \approx x_t + dt \cdot v_\alpha(x_t) = x_t -\nabla_{x_t} E(x_t,\theta(t+dt))+\nabla_{x_t} E_t(x_t,\theta(t)),
\end{align}

We denote this update rule \textbf{Method $\boldsymbol{\alpha}$}.
This method is basically the simulation of the Langevin dynamics, but in a deterministic way.
Indeed, taking $p(\cdot) \propto \exp(-E(\cdot, \theta(t+dt))$ in equation \eqref{eq:det_langevin}, we obtain the same formula up to the choice of the step-size.
Intuitively, this procedure tries to match the new log-density following its gradient and, at the same time, unmatch the old log-density.
We describe the full training procedure in Algorithm \ref{alg:alpha_beta}.

The second option, for the parametric models, is to discretize the update of parameters instead:
\begin{align}
    v^* = & -\nabla \frac{\partial E}{\partial t} = -\nabla\bigg\langle  \nabla_\theta E(\cdot,\theta), \frac{\partial \theta}{\partial t}\bigg\rangle \approx -\frac{1}{d t}\nabla\bigg\langle  \nabla_\theta E(\cdot,\theta(t)), \theta(t+dt) - \theta(t)\bigg\rangle = v_\beta.
\end{align}
This formula gives us another update rule, which we call \textbf{Method $\boldsymbol{\beta}$}:
\begin{align}
    x_{t+dt}\approx x_t + dt\cdot v_\beta(x_t) = x_t -\nabla_{x_t}\langle  \nabla_\theta E(x_t,\theta(t)),\theta(t+dt) - \theta(t)\rangle.
    \label{eq:beta_v}
\end{align}
Unlike method $\alpha$, this method is different from deterministic Langevin, and we will return to its intuition in a bit.
For the full training procedure, see Algorithm \ref{alg:alpha_beta}.

\begin{algorithm}[H]
  \caption{Methods $\alpha, \beta$}
  \begin{algorithmic}  
    \REQUIRE{samples from the target distribution $p(x)$}
    \STATE get initial samples $\{x^{(i)}_0\}_{i=1}^n \sim q_{\theta(0)}(x) \propto \exp(-E(x,\theta(0)))$
    \FOR{$t \in [0,\ldots,T]$}
        \STATE estimate $-\nabla_\theta \KL(p,q_\theta) = -\nabla_\theta\big[ \mean_{x\sim p} E(x,\theta) - \mean_{x\sim q_\theta} E(x,\theta)\big]$
        \STATE update parameters $\theta(t+dt) = \text{Optimizer}\big[\theta(t), -\nabla_\theta \KL(p,q_\theta)\big]$
        \STATE update samples $x^{(i)}_{t+dt} = x^{(i)}_{t} + dt \cdot v_{\alpha,\beta}(x^{(i)}_{t})$ (see the formulas for $v_\alpha$ and $v_\beta$ in the text)
    \ENDFOR
    \RETURN{trained density model $q_{\theta(T)}(x) \propto \exp(-E(x,\theta(T)))$, final set of samples $\{x^{(i)}_T\}_{i=1}^n$}
  \end{algorithmic}
  \label{alg:alpha_beta}
\end{algorithm}
\vspace{-10pt}

The third option we consider is the non-parametric updates, which we derive approximating the energy gradient in RKHS $\mathcal{H}$ with kernel $k$.
To minimize the KL-divergence we first take the Frechet derivative w.r.t. the energy $E$ along some direction $h$ and use the fact that $\mathcal{H}$ is actually dense in $L^2_q$ \citep{duncan2019geometry}. Then, using the reproducing property of $k$, we can formulate the directional derivative as an action of a linear operator:
\begin{align}
    \mathrm{diff}\KL(p,q)[h] & = \langle p/q - 1, h \rangle_{L^2_q} = \langle \mean_{x\sim p} k(x,\cdot) - \mean_{x\sim q} k(x,\cdot), h\rangle_{\mathcal{H}}
    \label{eq:RKHS_diff}
\end{align}
Following \citep{gretton2012kernel}, we see that $\mu_p = \mean_{x\sim p} k(x,\cdot) \in \mathcal{H}$ if $\mean_{x\sim p}\sqrt{k(x,x)} <\infty$. Hence, we can choose the direction $h \in \mathcal{H}$ matching the gradient.

The gradient then defines the vector field, which we denote as \textbf{Method $\boldsymbol{\gamma}$}:
\begin{align}
    v^* = - \nabla \frac{\partial E}{\partial t} \approx \underbrace{\mean_{x\sim p} \nabla k(x,\cdot)}_{\text{attraction to data}} - \underbrace{\mean_{x\sim q_t} \nabla k(x,\cdot)}_{\text{repulsion between particles}} = v_\gamma.
    \label{eq:gamma_v}
\end{align}
Intuitively, the particles are attracted to the dataset and repelled from each other.
Also, this vector field coincides with the MMD gradient flow  \citep{arbel2019maximum}, which is derived from a different perspective.

\begin{proposition}
    The convergence of the dynamics \eqref{eq:gamma_v} is described as:
    \begin{align}
        \frac{d}{dt} \KL(p,q_t) = -\mathrm{MMD}_k(p,q_t)^2.
    \end{align}
    \label{prop:mmd}
    \vspace{-10pt}
\end{proposition}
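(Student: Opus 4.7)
The plan is to exploit the RKHS formula \eqref{eq:RKHS_diff} for the Frechet derivative of $\KL$ in the energy direction, combined with the identification of $v_\gamma$ as $-\nabla \partial_t E_t$ for a specific $\partial_t E_t \in \mathcal{H}$. First I would rewrite the vector field as $v_\gamma(y) = \nabla_y\bigl(\mu_p(y) - \mu_{q_t}(y)\bigr)$, where $\mu_p = \mean_{x\sim p} k(x,\cdot)$ and $\mu_{q_t} = \mean_{x\sim q_t} k(x,\cdot)$ are the kernel mean embeddings in $\mathcal{H}$. Matching this with the identity $v^* = -\nabla \partial_t E_t$ from Proposition \ref{prop:main} identifies the implicit energy update as $\partial_t E_t = \mu_{q_t} - \mu_p \in \mathcal{H}$, up to an additive constant that is absorbed by the normalization $Z_t$ and therefore affects neither $q_t$ nor $\nabla \partial_t E_t$.

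Given this identification, the chain rule for $\KL$ at the energy level, followed by \eqref{eq:RKHS_diff}, gives
\begin{align*}
    \frac{d}{dt}\KL(p,q_t) = \mathrm{diff}\,\KL(p,q_t)[\partial_t E_t] = \langle \mu_p - \mu_{q_t},\, \mu_{q_t} - \mu_p\rangle_\mathcal{H} = -\norm{\mu_p - \mu_{q_t}}^2_\mathcal{H} = -\mathrm{MMD}_k(p,q_t)^2,
\end{align*}
which is the desired identity.

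The step I expect to be the main obstacle is justifying the equivalence between the density-level picture (particle flow under $v_\gamma$ via the continuity equation) and the energy-level picture ($E_t$ moving in $\mathcal{H}$ along $\mu_{q_t} - \mu_p$). Proposition \ref{prop:main} provides the bridge via an $L^2_{q_t}$ projection of log-density rates; making this projection exact here relies on the density of $\mathcal{H}$ in $L^2_{q_t}$ invoked in the paragraph preceding \eqref{eq:gamma_v}, and on $\mu_p, \mu_{q_t}$ both belonging to $\mathcal{H}$ (which follows from $\mean_{x\sim p}\sqrt{k(x,x)} < \infty$ and the analogous bound for $q_t$). Once that correspondence is secured, the RKHS inner product collapses immediately to $-\mathrm{MMD}_k^2$ and only routine Cauchy--Schwarz-type regularity checks remain to ensure all integrals and exchanges of derivatives with expectations are well defined.
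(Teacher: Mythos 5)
Your proposal is correct and follows essentially the same route as the paper: identify the implicit energy update of the $\gamma$-dynamics as $\partial_t E_t = \mu_{q_t} - \mu_p$ and evaluate $\tfrac{d}{dt}\KL(p,q_t) = \mean_p \partial_t E_t - \mean_{q_t}\partial_t E_t$ via \eqref{eq:RKHS_diff}. The only cosmetic difference is that you collapse the final step to $-\norm{\mu_p - \mu_{q_t}}_{\mathcal{H}}^2$ using the RKHS inner product directly, whereas the paper expands the same quantity into the three double expectations defining $\mathrm{MMD}_k^2$.
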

(See proof in Appendix \ref{app:proof_of_mmd}).
Hence, the KL-divergence between the target and the current approximation reduces proportionally to the squared MMD between these distribution.
The process stops when $\text{MMD}_k(p,q_t) = 0$. 
If the kernel is expressive enough (is universal), then $q$ converges to $p$.

We now return to the intuition of method $\beta$.
Taking the formula \eqref{eq:beta_v} and assuming that the updates of the parameters follows the gradient descent, we have
\begin{align}
    v_\beta = & -\frac{1}{dt}\nabla\bigg\langle  \nabla_\theta E(\cdot,\theta), d\theta\bigg\rangle = \nabla \bigg\langle \nabla_\theta E(\cdot,\theta), \mean_{x\sim p} \nabla_\theta E(x,\theta) - \mean_{x\sim q_t} \nabla_\theta E(x,\theta) \bigg\rangle = \\
    = & \nabla \bigg[\mean_{x\sim p} \langle\nabla_\theta E(\cdot,\theta), \nabla_\theta E(x,\theta) \rangle - \mean_{x\sim q_t} \langle\nabla_\theta E(\cdot,\theta), \nabla_\theta E(x,\theta) \rangle\bigg] = v_\gamma.
\end{align}
Thus, we see, that method $\beta$ is essentially method $\gamma$, but with the Neural Tangent Kernel $k_\theta(x,y) = \langle\nabla_\theta E(x,\theta), \nabla_\theta E(y,\theta) \rangle$.
Hence, it operates by targeting the distribution of data rather than approximating the current energy model.

This connection has two potential benefits for method $\gamma$, which has the well-known downsides of the kernel methods.
The first one is the scaling to higher dimensions since NTK could be more expressive than conventional kernels like RBF.
The second benefit is the scaling in terms of batch size since the scalar product kernel allows for efficient parallel computation of the vector field. 
We describe the full procedure in Algorithm \ref{alg:gamma}.

\begin{algorithm}[H]
  \caption{Method $\gamma$}
  \begin{algorithmic}  
    \REQUIRE{samples from the target distribution $p(x)$}
    \STATE get initial samples $\{x^{(i)}_0\}_{i=1}^n \sim q_{\theta(0)}(x) \propto \exp(-E(x,\theta(0)))$
    \FOR{$t \in [0,\ldots,T]$}
        \STATE estimate $\nabla_\theta \KL(p,q_t) = \nabla_\theta\big[ \mean_{x\sim p} E(x,\theta) - \mean_{x\sim q_t} E(x,\theta)\big]$
        \STATE update samples $x^{(i)}_{t+dt} = x^{(i)}_{t} + dt \cdot \nabla_{x^{(i)}_{t}} \bigg\langle \nabla_\theta E(x^{(i)}_{t},\theta), \nabla_\theta \KL(p,q_t) \bigg\rangle$
    \ENDFOR
    \RETURN{final set of samples $\{x^{(i)}_T\}_{i=1}^n$}
  \end{algorithmic}
  \label{alg:gamma}
\end{algorithm}
\vspace{-10pt}

\section[Empirical evaluation]{Empirical evaluation \footnote[1]{The code reproducing experiments is available at \href{https://github.com/necludov/particle-EBMs}{github.com/necludov/particle-EBMs}}}

\begin{figure}
    \centering
    \includegraphics[width=0.99\linewidth]{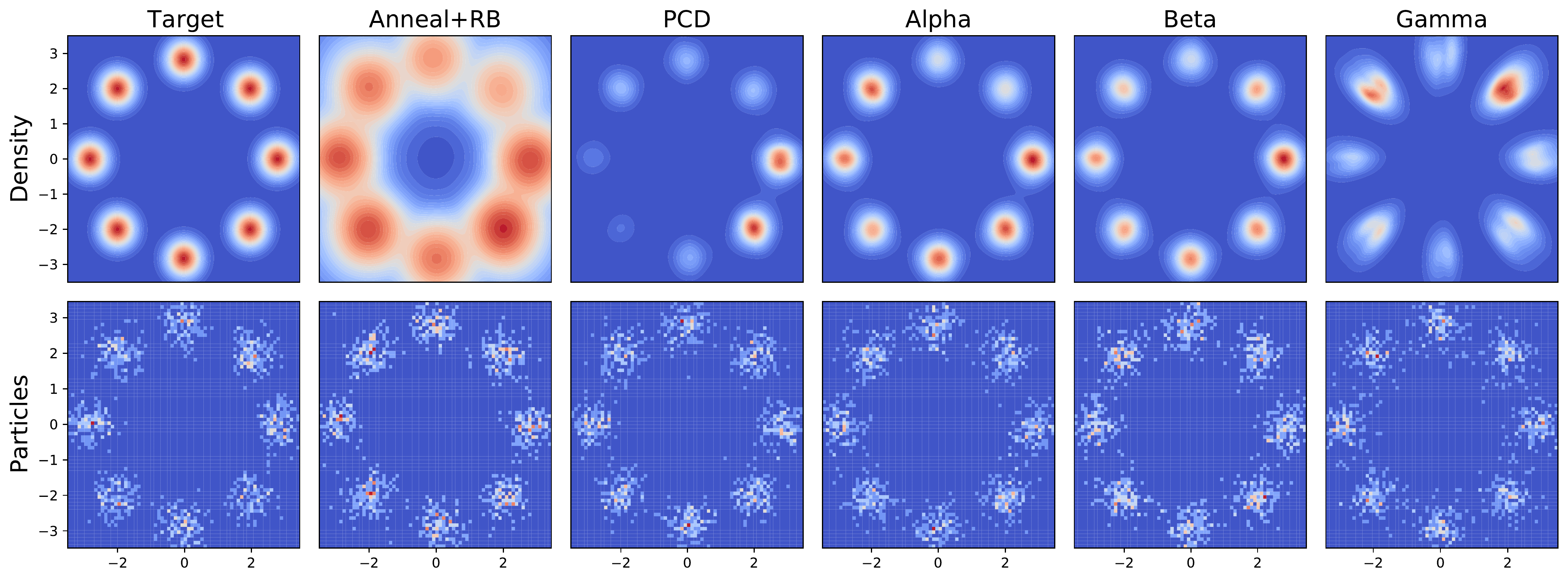}
    \caption{The top row depicts the learned densities for different approaches. Since method $\gamma$ doesn't yield the parametric model for the energy, we integrate the energy numerically using $\partial E/\partial t$ from \eqref{eq:RKHS_diff}. The bottom row depicts the histograms of samples obtained in the end of learning procedures.}
    \label{fig:densities}
\vspace{-10pt}
\end{figure}

We empirically test the proposed methods $\alpha, \beta, \gamma$ and compare them against conventional approaches: PCD \citep{tieleman2009using} and sampling with Replay Buffer \citep{du2019implicit}.
We found that for the stability of Replay Buffer it is important to reduce the noise magnitude (as also proposed in \citep{du2019implicit}).
That, however, yields sampling from an annealed target.
For both ``PCD'' and ``Anneal + RB'', we make $20$ steps of stochastic Langevin on every iteration.
For our methods we propagate the particles along the corresponding vector fields and make additional $10$ steps of stochastic Langevin to alleviate possible numerical errors.
For the target distribution we take toy 2-d distribution, and try to match it with 2-layer fully-connected neural network ($300$ hidden units, Swish activations \citep{ramachandran2017searching}).
For method gamma, we found that using the same parameters $\theta$ throughout the learning leads to degenerate solutions.
Therefore, we sample using the kernel $k_\theta(x,y) = \mean_{\theta \sim \pi_0}\langle\nabla_\theta E(x,\theta), \nabla_\theta E(y,\theta) \rangle$, where parameters $\theta$ are sampled from the initialization distribution $\pi_0$ at each iteration, i.e. we use unlearned random networks to propagate particles.

\begin{wrapfigure}{r}{0.5\textwidth}
\vspace{-15pt}
  \begin{center}
    \includegraphics[width=0.5\textwidth]{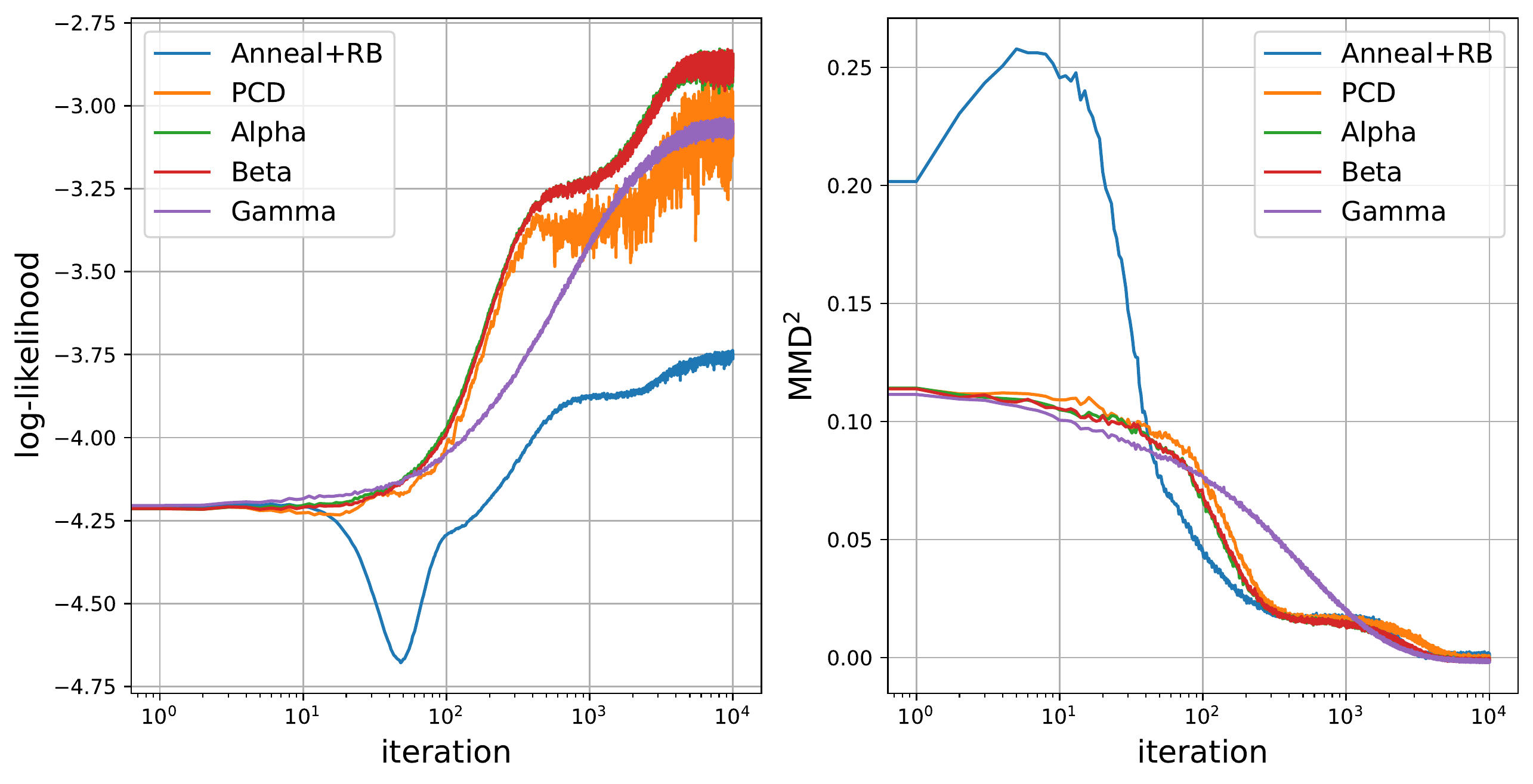}
  \end{center}
  \caption{Performance of the models throughout the training. The quality of the energy as measured by the log-likelihood, and the MMD$^2$ between current set of particles and the training batch. Note that the Alpha's plot is just under the Beta's plot.}
  \label{fig:training}
\vspace{-0pt}
\end{wrapfigure}
In Fig. \ref{fig:densities}, we demonstrate the learned densities and the particles.
In Fig. \ref{fig:training}, we report the metrics for models and samples averaging over $10$ independent runs. We don't report standard errors to keep the plots readable.
As we see, only $\alpha$ and $\beta$ nicely capture all of the modes. We found the learning via PCD to be the most unstable and unable to capture all of the modes in most cases. 
Annealing with the Replay Buffer is the most stable method in our experiments. 
However, it yields the density with scaled temperature due to the incorrect noise magnitude.
Finally, we conclude that the proposed methods demonstrate better performance with a lower computational budget.
Interestingly, all of the methods manage to match the final set of particles with the target distribution regardless of the learned energy.

\section{Conclusion}
We approach the problem of sampling from a distribution evolving in time, which is especially important in the context of energy-based learning. 
Our main contribution is the approximate formula for the vector field that propagates the particles matching the evolution of the distribution.
We demonstrate that this formula yields several reasonable algorithms connected to deterministic Langevin and MMD gradient flows.
Intuitively, method $\alpha$ moves the particles matching the new energy and, at the same time, unmatching the old energy.
In contrast, methods $\beta$ and $\gamma$ propel particles aiming the target data distribution and repelling particles from each other to cover the state-space.
Finally, we show that our deterministic approach can be favorable in practice for learning energy-based models.

\bibliography{iclr2022_conference}
\bibliographystyle{iclr2022_conference}

\newpage
\appendix

\section{Proof of proposition \ref{prop:main}}
\label{app:proof_of_main}

\begin{proposition*}
    The solution of 
    \begin{align}
        v^* = \maxargmax_{v\in L^2_q: \norm{v}=1} \bigg\langle \frac{\partial}{\partial t} \log q_t, \frac{\partial}{\partial t} \log \hat{q}_t \bigg\rangle_{L_q^2}
        \label{eq_app:projection}
    \end{align}
    is
    $v^* = -\nabla \frac{\partial E}{\partial t}$.
\end{proposition*}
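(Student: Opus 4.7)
The plan is to explicitly compute both sides of the $L^2_q$ inner product, reduce it to a plain inner product with $v$ via integration by parts, and then apply Cauchy--Schwarz.

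First, I would compute $\partial_t \log q_t$ directly from $q_t = e^{-E_t}/Z_t$. Taking logs gives $\log q_t = -E_t - \log Z_t$, and differentiating in $t$ yields
\begin{align}
\frac{\partial}{\partial t}\log q_t = -\frac{\partial E_t}{\partial t} - \frac{\partial \log Z_t}{\partial t} = -\frac{\partial E_t}{\partial t} + \mean_{q_t}\!\left[\frac{\partial E_t}{\partial t}\right],
\end{align}
where the last equality uses $\partial_t \log Z_t = -\mean_{q_t}[\partial_t E_t]$, itself obtained by differentiating $Z_t = \int e^{-E_t}dx$ under the integral. The key point is that the second term on the right is a constant in $x$; I will use this crucially below. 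The factor $\partial_t \log \hat{q}_t$ is already given in the excerpt as $-\langle \nabla\log q_t, v\rangle - \langle \nabla, v\rangle$ via the continuity equation.

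Next, I would simplify the inner product. For any smooth $f(x)$ with enough decay, integration by parts gives the identity
\begin{align}
\mean_{q}\!\left[f\bigl(-\langle \nabla\log q, v\rangle - \langle\nabla, v\rangle\bigr)\right] = -\int \nabla\!\cdot\!(q f v)\, dx + \int q \langle \nabla f, v\rangle\, dx = \mean_{q}\langle \nabla f, v\rangle,
\end{align}
assuming the boundary term vanishes. Applying this with $f = -\partial_t E_t + \mean_{q_t}[\partial_t E_t]$, the constant contribution disappears under $\nabla$, leaving
\begin{align}
\bigg\langle \frac{\partial}{\partial t}\log q_t,\, \frac{\partial}{\partial t}\log\hat{q}_t\bigg\rangle_{L^2_q} = \bigg\langle -\nabla \frac{\partial E_t}{\partial t},\, v\bigg\rangle_{L^2_q}.
\end{align}
Finally, Cauchy--Schwarz on $L^2_q$ under $\|v\|=1$ gives that the maximizer is $v^*/\|v^*\|$ with $v^* = -\nabla \partial_t E_t$, and the maximum is $\|v^*\|_{L^2_q}$; taking their product (the $\maxargmax$) yields $v^* = -\nabla \partial E_t/\partial t$ as claimed.

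The main obstacle is the integration-by-parts step: one must (i) correctly bookkeep the constant-in-$x$ term coming from $\partial_t \log Z_t$ and observe that its gradient vanishes, and (ii) justify discarding the boundary term, which is standard given suitable decay of $q_t f v$ at infinity but deserves a brief remark. Everything else is a mechanical computation followed by a one-line Cauchy--Schwarz argument.
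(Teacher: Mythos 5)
Your proposal is correct and follows essentially the same route as the paper: compute $\partial_t \log q_t = -\partial_t E_t + \mean_{q_t}[\partial_t E_t]$, use the continuity equation for $\partial_t\log\hat q_t$, integrate by parts to obtain $\langle -\nabla \partial_t E_t, v\rangle_{L^2_q}$, and conclude via the unit-norm maximization. Your explicit Cauchy--Schwarz step and the remark about the vanishing boundary term are slightly more careful than the paper's version, which leaves both implicit.
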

\begin{proof}
    The evolution of $\hat{q}_t$ is evolution defined by the vector field $v$, i.e.
    \begin{align}
        \frac{\partial}{\partial t}\log \hat{q}_t = \frac{1}{\hat{q}_t}\frac{\partial \hat{q}_t}{\partial t} = -\langle \nabla \log q_t, v\rangle - \langle \nabla, v\rangle,
    \end{align}
    and the first argument of the scalar product is defined by the updates of the energy
    \begin{align}
        \frac{\partial}{\partial t}\log q_t = -\frac{\partial E}{\partial t} + \mean_{q_t} \frac{\partial E}{\partial t}.
    \end{align}
    We rewrite the scalar product in \eqref{eq_app:projection} as
    \begin{align}
        \bigg\langle \frac{\partial}{\partial t} \log q_t, \frac{\partial}{\partial t} \log \hat{q}_t \bigg\rangle_{L_q^2} & = \int dx q \bigg[\frac{\partial E}{\partial t} - \mean_{q_t} \frac{\partial E}{\partial t}\bigg]\bigg[\langle \nabla \log q_t, v\rangle + \langle \nabla, v\rangle\bigg] = \\ 
        & =
        \int dx q \frac{\partial E}{\partial t}\bigg[\langle \nabla \log q_t, v\rangle + \langle \nabla, v\rangle\bigg] - \mean_{q_t} \frac{\partial E}{\partial t}\int dx q \frac{1}{q} \frac{\partial q}{\partial t} = \\
        & = \int dx q \frac{\partial E}{\partial t}\bigg[\langle \nabla \log q_t, v\rangle + \langle \nabla, v\rangle\bigg].
    \end{align}
    Integrating by parts, we have
    \begin{align}
        \bigg\langle \frac{\partial}{\partial t} \log q_t, \frac{\partial}{\partial t} \log \hat{q}_t \bigg\rangle_{L_q^2} & = \int dx \bigg\langle \frac{\partial E}{\partial t} \nabla q - \nabla \big(q\frac{\partial E}{\partial t}\big), v \bigg\rangle = \\ 
        &= \int dx q \bigg\langle -\nabla \frac{\partial E}{\partial t}, v \bigg\rangle = \bigg\langle -\nabla \frac{\partial E}{\partial t}, v \bigg\rangle_{L_q^2}.
    \end{align}
\end{proof}

\section{Proof of proposition \ref{prop:parameterizaiton}}
\label{app:proof_of_parametrization}

\begin{proposition*}
    Consider the functional $F = \int f(q)$, which we can optimize either w.r.t. $q = \exp(-E)/Z$ or w.r.t. $E$. Vector fields $v^*=v^\star$ when the evolution of the density (energy) is defined by the Frechet derivatve of $F$.
\end{proposition*}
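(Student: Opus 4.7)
The plan is to exploit the fact that both $v^\star = \nabla\dot q$ and $v^* = -\nabla\dot E$ are gradients of scalar fields, so the claimed equality reduces to showing that the two scalar potentials $\dot q$ and $-\dot E$ agree up to a spatially constant function (which is killed by $\nabla$). I would therefore evaluate the two Frechet derivatives that drive the evolutions and then check that the two potentials match on a common flow, invoking the chain rule between $q$ and $E$.

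First, I would compute the variational derivatives of $F=\int f(q)$ in the two parameterizations. The $q$-derivative is immediate: $\delta F/\delta q(x) = f'(q(x))$. For the $E$-derivative, I would use $q = e^{-E}/Z$ together with the functional chain rule, which gives $\delta q(y)/\delta E(x) = q(y)\bigl[q(x) - \delta(y-x)\bigr]$, and hence
\begin{align}
    \frac{\delta F}{\delta E}(x) = \int f'(q(y))\frac{\delta q(y)}{\delta E(x)}\, dy = -q(x)\bigl[f'(q(x)) - \mathbb{E}_q f'(q)\bigr].
\end{align}
Next, I would write out the induced flows. Enforcing $\int\dot q = 0$ via a Lagrange multiplier yields $\dot q(x) = -f'(q(x)) + c_q(t)$, while the energy flow reads $\dot E(x) = -\delta F/\delta E(x)$, which after identifying the two parameterizations as describing the same curve of densities (and absorbing the factor of $q$ into the natural $L^2_q$ pairing for $E$) can be written as $\dot E(x) = f'(q(x)) + c_E(t)$. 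Substituting into the formulas for $v^\star$ and $v^*$ kills the time-dependent constants $c_q(t)$ and $c_E(t)$, leaving $v^\star = -\nabla f'(q) = v^*$ as required.

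The main obstacle is the calibration step: the two Frechet derivatives a priori describe different flows (they only agree after one fixes the metric in which ``gradient'' is taken and respects the normalization constraint on $q$). I expect the cleanest way to handle this is to take the $L^2_q$ (Fisher--Rao) pairing on the energy side and the constrained $L^2$ pairing on the density side, so that both parameterizations descend along the same intrinsic flow of distributions; then the two scalar potentials differ only by a constant depending on $t$, and that constant drops out when one applies $\nabla$. After this calibration, the rest is a direct comparison of gradients using the chain rule, with no further analytical subtleties.
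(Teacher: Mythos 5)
Your proposal is correct and follows essentially the same route as the paper: compute $\delta F/\delta q = f'(q)$, push it through the chain rule for $q = e^{-E}/Z$ to find that the energy-side Frechet derivative in the $L^2_q$ pairing is $f'(q)$ minus its $q$-expectation, and observe that the spatially constant term is annihilated by $\nabla$, giving $v^* = v^\star$. The only cosmetic differences are your descent (rather than ascent) sign convention and the explicit Lagrange multiplier enforcing $\int \dot q = 0$, neither of which changes the argument.
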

\begin{proof}
    The directional derivative (along the direction $h \in L^2$) is
    \begin{align}
        \mathrm{diff}F(q)[h] = \bigg\langle \frac{\delta F(q)}{\delta q},h \bigg\rangle_{L^2}, \;\;\; \frac{\delta F(q)}{\delta q}h = \frac{d}{d\eps}f(q+\eps h)\big\vert_{\eps=0}.
    \end{align}
    We can think of $\frac{\delta F(q)}{\delta q}$ as of the formal symbolic application of differention rules.
    Then
    \begin{align}
        v^\star = \nabla \frac{\partial q}{\partial t} = \nabla \frac{\delta F(q)}{\delta q},
    \end{align}
    which coincides with the vector field given by the Otto Calculus \citep{otto2001geometry}.
    For the energy, we consider the direction $h \in L^2_q$, then we have
    \begin{align}
        \mathrm{diff}F(E)[h] = \int \frac{\delta F(q)}{\delta q}\frac{\delta q(E)}{\delta E}h = \bigg\langle -\frac{\delta F(q)}{\delta q} + \mean_q \frac{\delta F(q)}{\delta q},h \bigg\rangle_{L^2_q}.
    \end{align}
    Finally, we see that the two derivatives yield the same vector-field.
    \begin{align}
        v^* = -\nabla \frac{\partial E}{\partial t} = -\nabla\bigg[-\frac{\delta F(q)}{\delta q} + \mean_q \frac{\delta F(q)}{\delta q}\bigg] = v^\star
    \end{align}
\end{proof}

\section{Proof of proposition \ref{prop:pcd}}
\label{app:proof_of_pcd}

\begin{proposition*}
    The vector field $v^* = -\nabla \frac{\partial E}{\partial t}$ may be approximated by the ``conventional update rule'' of the particles following the Langevin dynamics targeting the updated density $q_{t+dt}$.
\end{proposition*}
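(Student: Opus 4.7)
The plan is to go from the continuous vector field $v^*$ to the stochastic PCD update in two small steps: a forward-Euler-in-time discretization, and a distributional identification of the leftover ``old score'' with Gaussian noise. First, I would approximate the time derivative by the finite difference $\partial E/\partial t \approx (E_{t+dt}-E_t)/dt$, which gives
$$v^*(x_t) \approx -\frac{1}{dt}\nabla\big[E_{t+dt}(x_t) - E_t(x_t)\big].$$
An Euler step of size $dt$ along $v^*$ then produces the deterministic update
$$x_{t+dt} \approx x_t - \nabla E_{t+dt}(x_t) + \nabla E_t(x_t),$$
which is exactly the method $\alpha$ formula from Section \ref{sec:approximations}, so this half of the derivation is already in hand.

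The second step is to reinterpret the correction $\nabla E_t(x_t)$ as Gaussian noise. The assumption underlying PCD is that the current particles are approximately samples from $q_t\propto e^{-E_t}$, and $q_t$ is the invariant measure of the overdamped Langevin SDE $dx = -\nabla E_t(x)\,ds + \sqrt{2}\,dW$. At stationarity, a single unit-size discretization $x' = x_t - \nabla E_t(x_t) + \sqrt{2}\,\eps$ therefore preserves the law of the particles to leading order, which lets me identify the marginal distribution of $\nabla E_t(x_t)$ with that of $\sqrt{2}\,\eps$. Substituting this identification into the deterministic update from the first step yields
$$x_{t+dt} = x_t - \nabla E_{t+dt}(x_t) + \sqrt{2}\,\eps,$$
which is precisely one step of unit-size Langevin dynamics targeting the updated energy $E_{t+dt}$, i.e.\ the conventional PCD move.

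The main obstacle, which I would flag explicitly rather than try to hide, is that the identification $\nabla E_t(x_t)\stackrel{d}{\approx}\sqrt{2}\,\eps$ is only distributional, not pathwise: as a random vector, $\nabla E_t(x_t)$ is correlated with $\nabla E_{t+dt}(x_t)$ in a way that an independent Gaussian is not, and the leading-order ``stationarity preserves the law'' step neglects discretization bias. Accordingly, the proposition should be read as a heuristic correspondence: PCD with one Langevin step aimed at $q_{t+dt}$ matches the deterministic flow along $v^*$ in law rather than sample by sample, and the appendix derivation can remain at the level of these two approximations without attempting a pathwise coupling.
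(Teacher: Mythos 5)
Your first step---the finite-difference discretization $v^*\approx-\frac{1}{dt}\nabla\big[E_{t+dt}-E_t\big]$ and the resulting deterministic update $x_{t+dt}=x_t-\nabla E_{t+dt}(x_t)+\nabla E_t(x_t)$---is exactly how the paper's proof begins. The gap is in your second step. The claim that stationarity of $q_t$ under the Langevin kernel ``lets me identify the marginal distribution of $\nabla E_t(x_t)$ with that of $\sqrt{2}\,\eps$'' is false, not merely non-pathwise: invariance of a measure under the map $x\mapsto x-\nabla E_t(x)+\sqrt{2}\,\eps$ says nothing about the law of the summand $\nabla E_t(x_t)$ on its own. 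Already for $q_t=\Normal(0,I)$ one has $\nabla E_t(x_t)=x_t\sim\Normal(0,I)$ while $\sqrt{2}\,\eps\sim\Normal(0,2I)$, and for non-Gaussian $q_t$ the vector $\nabla E_t(x_t)$ is not Gaussian at all. The scaling also fails at the infinitesimal level where such an argument would have to live: over a time step $h$ the deterministic contribution is $O(h)$ while the noise is $O(\sqrt{h})$, so no term-by-term distributional matching of the two increments is possible.

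What actually makes the proposition work---and what the paper's proof does---is an identification at the level of the density evolution, not of the random increments. Writing the continuity equation for the deterministic flow and using $\nabla E_t=-\nabla\log q_t$ (up to the normalizing constant), the piece of the drift equal to $-\frac{1}{dt}\nabla\log q_t$ contributes $-\langle\nabla, q_t\cdot(-\frac{1}{dt}\nabla\log q_t)\rangle=\frac{1}{dt}\Delta q_t$, which is exactly the diffusion term of a Fokker--Planck equation. Hence the continuity equation for $v^*$ coincides with the Fokker--Planck equation of the SDE with drift $\frac{1}{dt}\nabla\log q_{t+dt}$ and diffusion coefficient $2/dt$, and one step of size $dt$ of that SDE is the conventional PCD move $x_{t+dt}=x_t-\nabla E_{t+dt}(x_t)+\sqrt{2}\,\eps$. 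This is the deterministic-Langevin equivalence of equation \eqref{eq:det_langevin} read in reverse; it is the divergence structure (an integration by parts against $q_t$), not a matching of the marginal laws of the two additive terms, that carries the argument. Your concluding remark that the correspondence holds ``in law rather than sample by sample'' is the right conclusion, but the mechanism you offer does not establish it.
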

\begin{proof}
    Let's approximate the vector field $v^*$ as 
    \begin{align}
        v^*\approx -\nabla \frac{1}{dt}\bigg[E_{t+dt}-E_t\bigg] = \frac{1}{dt}\bigg[\nabla\log q_{t+dt}-\nabla\log q_t\bigg],
    \end{align}
    Then the evolution of the density is described by the FP equation:
    \begin{align}
        \dot{q} = -\langle\nabla, q_t v^*\rangle = -\langle\nabla, q_t \frac{1}{dt}\nabla\log q_{t+dt}\rangle + \frac{1}{dt}\Delta q_t.
    \end{align}
    Hence the evolution of particles can be described by the Ito equation
    \begin{align}
        x_{t+dt'} = x_t + dt'\frac{1}{dt}\nabla\log q_{t+dt}(x_t) + \sqrt{\frac{2}{dt}}dW_t,
    \end{align}
    where $dW_t$ is the Wiener process, which can be simulated by the normal random variable $\Normal(0,dt')$.
    Taking $dt' = dt$, we have the conventional update rule (up to the step size choice)
    \begin{align}
        x_{t+dt} = x_t + \nabla\log q_{t+dt}(x_t) + \sqrt{2}\Normal(0,1) = x_t - \nabla E_{t+dt}(x_t) + \sqrt{2}\Normal(0,1).
    \end{align}
\end{proof}

\section{Proof of Proposition \ref{prop:mmd}}
\label{app:proof_of_mmd}

\begin{proposition*}
    The convergence of \eqref{eq:gamma_v} is described by the equation:
    \begin{align}
        \frac{d}{dt} \KL(p,q_t) = -\mathrm{MMD}_k(p,q_t)^2.
    \end{align}
\end{proposition*}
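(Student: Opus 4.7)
The plan is to detour through the energy parameterization rather than work with the continuity equation directly, since the paper already supplies the key Fréchet-derivative identity needed. Step one: for any smooth energy evolution $E_t$, I would establish the generic identity
\begin{equation*}
\frac{d}{dt}\KL(p,q_t) = \mean_p[\partial_t E_t] - \mean_{q_t}[\partial_t E_t].
\end{equation*}
This is immediate from $\log q_t = -E_t - \log Z_t$ together with $\partial_t \log Z_t = -\mean_{q_t}[\partial_t E_t]$ (differentiating $Z_t = \int e^{-E_t}\,dx$ under the integral sign), and it is also precisely the directional derivative from equation \eqref{eq:RKHS_diff} specialized to $h = \partial_t E_t$.

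Step two: I would identify the effective energy evolution realized by the particle flow of method $\gamma$. Writing $\mu_\rho(y) := \mean_{x\sim\rho} k(x,y)$ for the kernel mean embedding, the vector field in \eqref{eq:gamma_v} rearranges to $v_\gamma(y) = \nabla_y\bigl[\mu_p(y) - \mu_{q_t}(y)\bigr]$, which is a genuine gradient field. Matching this against the defining relation $v_\gamma = -\nabla(\partial_t E_t)$ from \eqref{eq:main_eq} pins down $\partial_t E_t(y) = \mu_{q_t}(y) - \mu_p(y)$ up to an additive $y$-independent constant, and any such constant cancels in the difference $\mean_p - \mean_{q_t}$ appearing in Step one.

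Step three: I would substitute this expression into the identity of Step one and unfold the double expectations by Fubini to recognize the standard kernel expansion of the squared MMD:
\begin{align*}
\frac{d}{dt}\KL(p,q_t) &= \mean_p[\mu_{q_t} - \mu_p] - \mean_{q_t}[\mu_{q_t} - \mu_p] \\
&= 2\,\mean_{x\sim p,\,y\sim q_t} k(x,y) - \mean_{x,y\sim p} k(x,y) - \mean_{x,y\sim q_t} k(x,y) \\
&= -\mathrm{MMD}_k(p,q_t)^2.
\end{align*}

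The main obstacle is conceptual rather than technical, and it lies in Step two: the vector field $v_\gamma$ is defined as an update rule on particles, not as a prescription for an energy evolution, so one has to argue that the association $\partial_t E_t = \mu_{q_t} - \mu_p$ is legitimate. This hinges on $v_\gamma$ being a gradient (true by inspection of \eqref{eq:gamma_v}) and on the additive-constant ambiguity being harmless (which it is, by the antisymmetric $\mean_p - \mean_{q_t}$ structure). An alternative route via the continuity equation, giving $\tfrac{d}{dt}\KL = \int p\langle\nabla\log q_t - \nabla\log p,\, v_\gamma\rangle\,dx$ after one integration by parts, also works, but converting the resulting expression into an MMD requires more algebraic manipulation than the energy-parameterization route above.
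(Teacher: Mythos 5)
Your proposal is correct and follows essentially the same route as the paper's proof: both reduce $\frac{d}{dt}\KL(p,q_t)$ to $\mean_{p}[\partial_t E] - \mean_{q_t}[\partial_t E]$, substitute $\partial_t E = \mean_{x\sim q_t}k(x,\cdot) - \mean_{x\sim p}k(x,\cdot)$ as dictated by \eqref{eq:gamma_v}, and expand the double expectations into the squared MMD. Your Steps one and two simply make explicit (the $\log Z_t$ cancellation and the harmless additive-constant ambiguity in identifying $\partial_t E$ from a gradient field) what the paper leaves implicit.
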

\begin{proof}
    \begin{align}
        \mathrm{diff}\KL(p,q)[h] & = \langle p/q - 1, h \rangle_{L^2_q} = \langle \underbrace{\mean_{x\sim p} k(x,\cdot) - \mean_{x\sim q} k(x,\cdot)}_{\partial E/\partial t}, h\rangle_{\mathcal{H}}
    \end{align}
    \begin{align}
        \frac{d}{dt} \KL(p,q_t) = & -\mean_{x\sim p} \frac{d}{dt} \log q_t(x) = \mean_{x\sim p} \frac{\partial}{\partial t} E(x) - \mean_{x\sim q_t} \frac{\partial}{\partial t} E(x) = \\
        = & \mean_{x'\sim p}\big[\mean_{x\sim q_t} k(x,x') - \mean_{x\sim p} k(x,x')\big] - \\
         & - \mean_{x'\sim q_t}\big[\mean_{x\sim q_t} k(x,x') - \mean_{x\sim p} k(x,x')\big] = \\
        = & -\mean_{x,x'\sim p} k(x,x') + 2\mean_{x\sim p, x'\sim q_t} k(x,x') -\mean_{x,x'\sim q_t} k(x,x') = \\
        = & -\mathrm{MMD}_k(p,q_t)^2.
    \end{align}
\end{proof}

\end{document}